\documentclass{article}
\usepackage{spconf,amsmath,graphicx,hyperref}
\usepackage{booktabs}
\usepackage{multirow} 
\usepackage{amsmath}
\newtheorem{theorem}{Theorem}

\newtheorem{proof}{Proof}[section]
\usepackage{makecell}
\usepackage{enumitem}
\usepackage{amsfonts}
\usepackage{etoolbox}
\makeatletter
\patchcmd{\thebibliography}
  {\list} 
  {\ninept\list} 
  {}{}
\makeatother


\title{AR-LIF: Adaptive Reset Leaky Integrate-and-Fire Neuron for Spiking Neural Networks}
%
\name{Zeyu Huang, Wei Meng, Quan Liu, Kun Chen and Li Ma*\thanks{* Corresponding author. excellentmary@whut.edu.cn}
\thanks{This work was supported by National Natural Science Foundation
of China (62572364), Natural Science Foundation of Wuhan Municipality (2024040801020263), Science and Technology Talents Serving Enterprise Program of Hubei Province (2025DJB030). 
}}

\address{School of Information Engineering, Wuhan University of Technology, Wuhan, China}
%
%
%
\begin{document}
\topmargin=0mm
%
\maketitle
\begin{abstract}
Spiking neural networks offer low energy consumption due to their event-driven nature. Beyond binary spike outputs, their intrinsic floating-point dynamics merit greater attention. Neuronal threshold levels and reset modes critically determine spike count and timing. Hard reset cause information loss, while soft reset apply uniform treatment to neurons. To address these issues, we design an adaptive reset neuron that establishes relationships between inputs, outputs, and reset, while integrating a simple yet effective threshold adjustment strategy. Experimental results demonstrate that our method achieves excellent performance while maintaining lower energy consumption. In particular, it attains state-of-the-art accuracy on Tiny-ImageNet and CIFAR10-DVS. Codes are available at \href{https://github.com/2ephyrus/AR-LIF}{https://github.com/2ephyrus/AR-LIF}.
\end{abstract}
\begin{keywords}
Neuromorphic Computing, Spiking Neural Networks, Spiking Neurons, Neuron Reset Modes
\end{keywords}
\section{Introduction}
\label{sec:intro}

As the third generation of neural networks \cite{maass1997networks}, Spiking Neural Networks (SNNs) possess an event-driven nature. They transmit information using 0/1 spikes and enjoy the energy efficiency advantage of multiplication-free inference by converting multiplications into additions \cite{meng2025sd}. A current research focus is how to achieve performance comparable to or even surpassing that of Convolutional Neural Networks (CNNs) while preserving the event-driven property \cite{yao2023sdt, yao2025scalingv3, stssa}. Discrete and iterable spiking neuron models \cite{wu2018stbp} 
 have facilitated the development of algorithms for direct training of SNNs \cite{deng2022tet, duan2022tebn}. The oversimplification of spiking neuron dynamics \cite{liu2023reconstruction} and the neglect of spiking neuron heterogeneity \cite{zhang2025dalif} are key factors contributing to the suboptimal performance of SNNs. For spiking neurons, different reset modes and threshold levels significantly affect the frequency and number of spikes fired \cite{ai2025cross}. Neurons employing "hard reset" reset diverse membrane potentials to zero, which reduces the information encoding capability of the network, causes information loss, and leads to performance degradation \cite{guo2022reducing}. Neurons using "soft reset" have a post-reset potential greater than zero, risking over-activation; moreover, they ignore neuronal heterogeneity, limiting the expressive power of spiking neurons \cite{zhang2024tc}.

To address these issues, we propose an adaptive reset leaky integrate-and-fire (AR-LIF) neuron. Based on observations of the simplified soft-reset spiking neuron equation, we find that the output of a spiking neuron is closely related to historical inputs and outputs. Thus, regulating the temporal dynamics of spiking neurons using historical inputs and outputs becomes a reasonable approach. We introduce an additional memory variable into the neuron dynamics, which is responsible for computing nonlinear operations on inputs and outputs. This variable, featuring temporal decay properties, acts as a feedback signal to participate in the neuron's reset process. Additionally, existing reset modes are closely linked to threshold voltage. To enhance the heterogeneity of spiking neurons \cite{wang2022ltmd}, we express the threshold voltage as a nonlinear function of the input at the same moment, meaning the threshold level is spatiotemporally independent.

Fig. \ref{fig_1} (b) visualizes the neuron outputs, showing that AR-LIF has a lower spike firing rate.
Fig. \ref{fig_1} (c) presents the membrane potential distribution of neurons under inputs with perturbed normal distribution. AR-LIF exhibits lower JS divergence and Wasserstein Distance, and according to \cite{ding2025rethinking}, the greater the difference in the membrane potential distribution of neurons across time steps, the poorer the network performance, it can be inferred that AR-LIF has better performance.

Our proposed method has been validated on multiple static and neuromorphic datasets, achieving state-of-the-art (SOTA) results. Extensive ablation experiments demonstrate the robustness and effectiveness of the proposed model.


\begin{figure*}
\centering
\includegraphics[width=6.45 in, keepaspectratio]{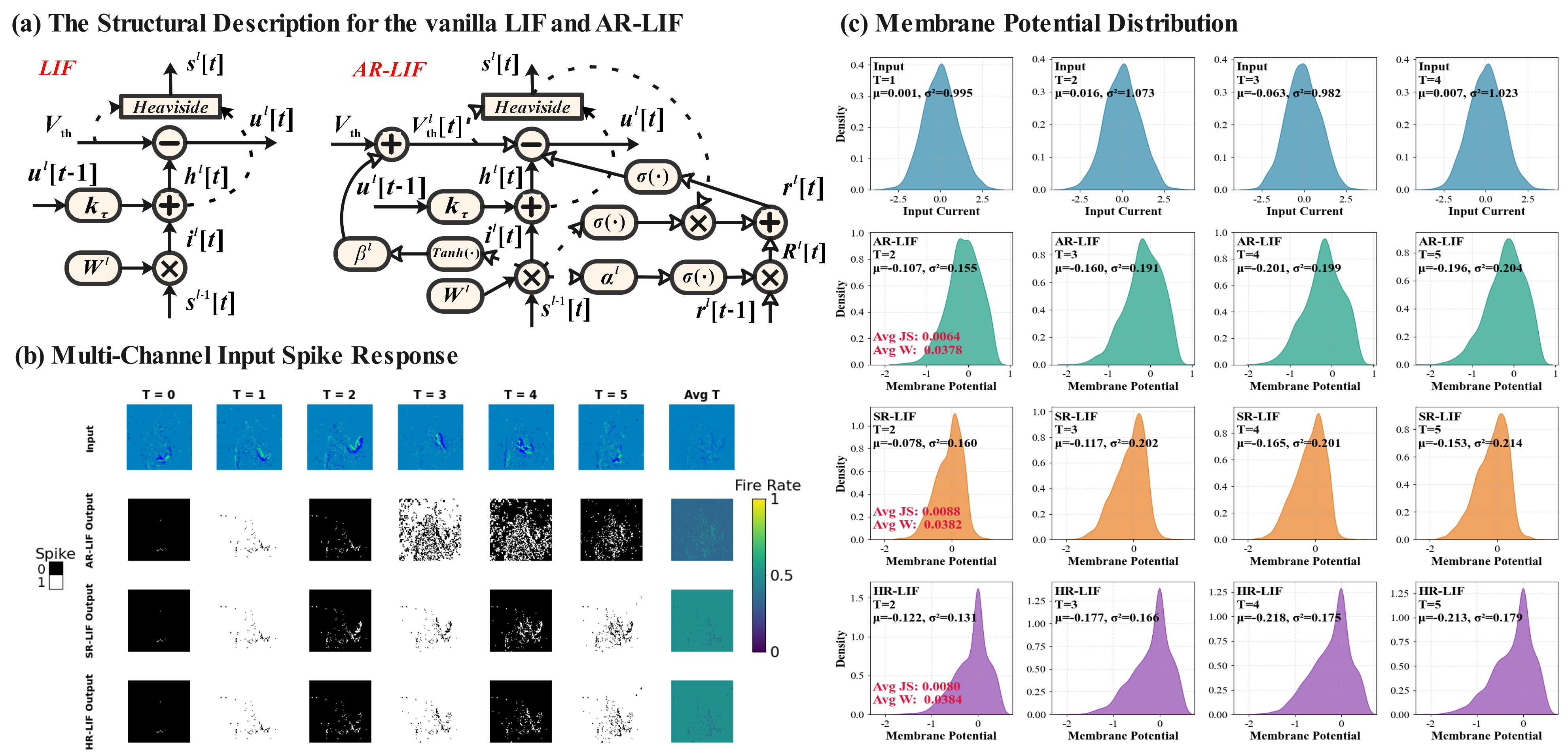}\\
\caption{(a) The hollow arrows in the figure indicate the additional computational processes introduced by AR-LIF. The dashed lines represent the replication of the solid-line outputs of the nodes. (b) Spike responses to 3-channel gesture video inputs. (c) The membrane potential distribution of neurons under inputs with perturbed normal distributions.
}
\label{fig_1}
\end{figure*}

\section{Methods}
\label{sec:format}
\subsection{Leaky Integrate-and-Fire (LIF) Neuron}
The discrete and iterative LIF neuron's dynamical equations can be described by three processes:

\textbf{\textit{Leaky and integrate:}}
The membrane potential of neurons is composed of the self-decayed membrane potential plus the input current.
\begin{equation}
h^l[t]=k_\tau u^l[t-1]+i^l[t],
\end{equation}
where $i^l[t]=W^l * s^{l-1}[t]$, $W^l$ is the synaptic weight, $t$ is time step, and $l$ denotes the $l$-$th$ layer of the network. $k_\tau \in(0,1)$ is the neuron membrane decay factor, which is a constant. $i^l[t]$ is the input of the neuron, $u^l[t-1]$ is the membrane potential after spike firing, $h^l[t]$ is the integrated membrane potential, $s^{l-1}[t]$ is the spike output of the neuron.

\textbf{\textit{Fire:}}
When the membrane potential after leaky integration satisfies $h^l[t]>V_{\mathrm{th}}$ , the output is 1; otherwise, the output is 0.
\begin{equation}
s^l[t]=\Theta\left(h^l[t]-V_{\text {th}}\right),
\end{equation}
where $\Theta(\cdot)$ is the Heaviside function, $V_{\mathrm{th}}$ is the neuronal threshold.

\textbf{\textit{Reset:}}
After a spiking neuron fires a spike, it needs to reset to restore the voltage to a steady state.
\begin{equation}
u^l[t]=\left\{\begin{array}{c}
h^l[t] \odot\left(1-s^l[t]\right), \text { Hard Reset } \\
h^l[t]-\rho s^l[t], \text { Soft Reset }
\end{array}\right.
\end{equation}
where $ \odot\ $ denotes element-wise matrix multiplication. In general, $\rho = V_{\text {th}}$, in some studies \cite{zhang2024tc}, it is a learnable parameter.

\subsection{Motivation of Adaptive Reset Neuron Design}
\begin{theorem}
\label{the}
For a spiking neuron using soft reset, the spike it fires at time $t$ can be expressed as an iterative formula of input and output, independent of the membrane potential.
\end{theorem}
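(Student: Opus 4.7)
The plan is to eliminate the membrane potential $u^l[t]$ from the soft-reset dynamics by repeatedly substituting the update rule into itself, turning it into a closed-form expression that depends only on the history of inputs $\{I^l[j]\}_{j\le t}$ and the history of spikes $\{s^l[j]\}_{j<t}$.

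First I would fix notation and distinguish the pre-reset potential from the post-reset potential. Using the equations in the excerpt, the post-reset potential satisfies
\begin{equation}
u^l[t] = k_\tau\, u^l[t-1] + I^l[t] - \rho\, s^l[t-1],
\end{equation}
where the $-\rho\, s^l[t-1]$ term carries the soft-reset applied at the end of the previous step. Next I would unroll this one-step recursion by induction on $t$, assuming the standard initial condition $u^l[-1]=0$, to obtain
\begin{equation}
u^l[t] = \sum_{j=0}^{t} k_\tau^{\,t-j}\, I^l[j] - \rho \sum_{j=0}^{t-1} k_\tau^{\,t-j}\, s^l[j].
\end{equation}
At this point the membrane potential has been expressed purely in terms of the past inputs and past spikes, with no residual $u^l[\,\cdot\,]$ on the right-hand side.

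Then I would substitute this expression into the Heaviside firing rule of Equation~2 to obtain
\begin{equation}
s^l[t] = \Theta\!\left( \sum_{j=0}^{t} k_\tau^{\,t-j}\, I^l[j] - \rho \sum_{j=0}^{t-1} k_\tau^{\,t-j}\, s^l[j] - V_{\text{th}} \right).
\end{equation}
This is the claimed iterative formula: $s^l[t]$ is written as a function of the input sequence $I^l[0],\dots,I^l[t]$ and the output sequence $s^l[0],\dots,s^l[t-1]$, with the membrane potential no longer appearing explicitly.

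The argument is essentially bookkeeping, so there is no deep obstacle; the only delicate point will be being consistent about \emph{when} the reset is applied within a single time step (i.e., whether $s^l[t-1]$ or $s^l[t]$ is subtracted when forming $u^l[t]$), since the paper writes the reset equation with $s^l[t]$ on the same step as $u^l[t]$. I would resolve this by explicitly naming the pre- and post-reset quantities once at the start of the proof and then using a single convention throughout, after which the unrolling and substitution go through mechanically.
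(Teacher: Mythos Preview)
Your approach---unrolling the linear soft-reset recursion to eliminate the membrane potential and then plugging the result into the Heaviside firing rule---is exactly what the paper does, and your final expression for $s^l[t]$ coincides with the paper's Equation~(4) (with $\rho=V_{\text{th}}$). The indexing caveat you flag is real: as written, your first displayed recursion (calling $u^l[t]$ the \emph{post}-reset potential but subtracting $\rho\,s^l[t-1]$) is inconsistent, and the paper resolves this by introducing a separate pre-spike symbol $h[t]$ and writing $u[t]=h[t]-s[t]\,V_{\text{th}}$; once you adopt that convention the unrolling goes through cleanly and yields the formula you already stated.
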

\begin{proof}
Let the threshold of the neuron be $V_{\text {th}}$, and $u^l[t]$ denotes the membrane potential of the neuron after the spike firing process, and $h^l[t]$ denotes the membrane potential after leaky integration. We have $h^l[t]=\tau u^l[t-1]+i^l[t]$, $u^l[t]=h^l[t]-s^l[t] \cdot V_{\text {th}}$. It can be derived that: 
\begin{equation}
\begin{aligned}
& s^l[t]=\Theta\left(h^l[t]-V_{\text {th}}\right) \\
& =\Theta\left(\left(\tau\left(h^l[t-1]-V_{\text {th }} s^l[t-1]\right)\right)+i^l[t]-V_{\text {th}}\right) \\
& =\Theta(\underbrace{i^l[t]+\tau i^l[t-1]+\cdots+\tau^{t-1} i^l[1]}_{\text {Iterative Formula of Input }} \\
& -V_{\text {th}}(1+\underbrace{\left.\tau s^l[t-1]+\cdots+\tau^{t-1} s^l[1]\right)}_{\text {Iterative Formula of Output }})
\end{aligned}
\end{equation}
\end{proof}

\textbf{Observation 1: \textit{Limitations of existing reset modes.}}
As prior work notes, hard reset’s homogeneous mechanism causes significant information loss \cite{guo2022reducing}. Soft reset avoids resetting spiking neurons’ potential to 0, instead uniformly subtracting the threshold voltage from them, thus retaining voltage exceeding the threshold but risking excessive activation. Though a learnable parameter $\rho$ enables adaptive adjustment during training, $\rho$ is constrained to positive values \cite{zhang2024tc}, leaving the over-activation risk unresolved. Moreover, $\rho$ is shared across neurons in the same layer, lacking neuron-level heterogeneity.

\textbf{Observation 2: \textit{Spiking neurons' output using soft reset is an iteration of inputs and outputs.}}
By Theorem \ref{the}, a neuron's spiking at time $t$ is closely linked to cumulative attenuated inputs and spikes from time step 0 to $t$. Since reset voltage indirectly affects spiking by influencing membrane potential, leveraging cumulative inputs and outputs to construct reset voltage constitutes a viable attempt.

\textbf{Observation 3: \textit{Inspiration from neuronal feedback regulation mechanisms.}}
Current studies have shown that the excitation and inhibition \cite{zhao2022backeisnn} of biological neurons are closely related to inputs \cite{zhang2024tc}. Therefore, regulating neuronal spike firing using input information is biologically plausible \cite{zheng2024den}. 

Based on the above observations on existing neurons, we attempt to design a novel spiking neuron model that integrates an adaptive reset mechanism with a threshold adjustment strategy.

\subsection{Adaptive Reset Leaky Integrate-and-Fire Neuron}

\textbf{\textit{Adaptive decay of input accumulations:}}
Consider introducing an additional memory variable $r^l[t]$ into the LIF neuron. $r^l[t]$  has an decay property, and the strength of its decay is associated with the input. For negative values of $r^l[t]$, a complementary decay coefficient is adopted. It can be described as:
\begin{equation}
\begin{aligned}
& R^l[t] = \begin{cases} 
\sigma(\alpha^l i^l[t]) \cdot r^l[t-1], & r^l[t-1] \geq 0 ,\\ 
(1-\sigma(\alpha^l i^l[t])) \cdot r^l[t-1], & r^l[t-1]<0 ,
\end{cases}
\end{aligned}
\end{equation}
where $R^l[t]$ denotes an intermediate quantity prior to the accumulation that yields $r^l[t]$, $\alpha^l \in \mathbb{R}$ is a learnable parameter, $\sigma(\cdot)$ is the sigmoid function, $i^l[t]$ is the input at time step $t$, and $r^l[-1]=0$.

\textbf{\textit{Spike feedback calculation of input accumulations:}}
$r^l[t]$ is used to count the input penalty amount related to spikes and the input incentive amount without spikes. For inputs that generate spikes, they are accumulated normally; for inputs that do not generate spikes, reverse accumulation is performed. It can be described as:
\begin{equation}
\begin{aligned}
& r^l[t]=R^l[t]+s^l[t] \cdot \sigma(i^l[t])-(1-s^l[t]) \cdot \sigma(i^l[t]) ,\\
& = \begin{cases}R^l[t]+\sigma(i^l[t]), & s^l[t]=1, \\
R^l[t]-\sigma(i^l[t]), & s^l[t]=0,\end{cases}
\end{aligned}
\end{equation}
where $s^l[t]$ is the output of the spiking neuron at time step $t$.

\textbf{\textit{Adaptive reset voltage calculation:}}
To ensure the stability of the reset process, the adaptive reset voltage needs to be adjusted by the input accumulation with the threshold as the reference, which is specifically described as:
\begin{equation}
v_{\mathrm{r}}^l[t]=V_{\mathrm{th}}^l[t]+\sigma(r^l[t]),
\end{equation}
where $V_{\mathrm{th}}$ is the neuron threshold, which is usually set to 1. We enable the threshold to undergo small-range adaptive adjustment; specifically, $V_{\text {th}}^l[t]=1+\beta^l  \operatorname{Tanh}(i^l[t])$, $\operatorname{Tanh}(\cdot)$ is the hyperbolic tangent function, $\beta^l \in(-1,1)$. 

The adaptive reset process can be calculated as:
\begin{equation}
u^l[t]=h^l[t]-v_{\mathrm{r}}^l[t] \cdot s^l[t],
\end{equation}
where $v_{\mathrm{r}}^l[t]$ is the adaptive reset voltage.

\textbf{\textit{AR-LIF neuronal dynamics:}}
The structural description for AR-LIF can be found in Fig. \ref{fig_1} (a). Based on the above derivation, the dynamical equations of AR-LIF can be described as follows:
\begin{equation}
\label{eq9}
\left\{\begin{array}{c}
h^l[t]=k_\tau u^l[t-1]+i^l[t], \\
s^l[t]=\Theta\left(h^l[t]-V_{\text {th}}^l[t]\right), \\
R^l[t]=\left\{\begin{array}{c}
\sigma\left(\alpha^l  i^l[t]\right) \cdot r^l[t-1], r^l[t-1] \geq 0, \\
\left(1-\sigma\left(\alpha^l  i^l[t]\right)\right) \cdot r^l[t-1], r^l[t-1]<0,
\end{array}\right. \\
r^l[t]=R^l[t]+(2s^l[t]-1) \cdot \sigma(i^l[t]),\\
v_{\mathrm{r}}^l[t]=V_{\mathrm{th}}^l[t]+\sigma(r^l[t]),\\
u^l[t]=h^l[t]-v_{\mathrm{r}}^l[t] \cdot s^l[t],
\end{array}\right.
\end{equation}
where $r^l[t]$ is the self-feedback accumulation of input current.

\section{Experiments}
\label{sec:Experiments}
\subsection{Experimental Details}
We use the direct training method to train the network, and the loss function is defined based on TET \cite{deng2022tet}.
\begin{equation}
\mathcal{L}=\lambda \mathcal{L}_{\mathrm{TET}}+(1-\lambda) \mathcal{L}_{\mathrm{MSE}},
\end{equation}
where $\lambda$ is a constant used to control the proportion of each part of the loss function. $\mathcal{L}_{\mathrm{TET}}=\frac{1}{T} \sum_{t=1}^T \mathcal{L}_{\mathrm{CE}}(O(t), y)$, $\mathcal{L}_{\mathrm{MSE}}=\frac{1}{T} \sum_{t=1}^T \operatorname{MSE}(O(t), \phi)$. $O(t)$ is the output of the last layer of the network, and $y$ is the real label value. When $\mathcal{L}_{\text {MSE }}$ is used as a regular term, $\phi$ is a constant; otherwise, it represents the one-hot encoding of the real label value.

We use a rectangular function as a surrogate function to solve the problem of non-differentiability of spike activities in the direct training process:
\begin{equation}
\frac{\partial \mathcal{L}}{\partial W^l}=\sum_{t=0}^T\left(\frac{\partial \mathcal{L}}{\partial s^l[t]} \cdot \frac{\partial s^l[t]}{\partial h^l[t]}+\frac{\partial \mathcal{L}}{\partial u^l[t]} \cdot \frac{\partial u^l[t]}{\partial h^l[t]}\right) \frac{\partial h^l[t]}{\partial W^l}.
\end{equation}
\begin{equation}
\frac{\partial s^l[t]}{\partial h^l[t]}=\frac{1}{a} \cdot \operatorname{sign}\left(\left|h^l[t]-V_{\mathrm{th}}\right|<\frac{a}{2}\right),
\end{equation}
where $a$ is set to 1. When $V_{\text {th}}-0.5 \leq h^l[t] \leq V_{\text {th}}+0.5$, $\frac{\partial s^l[t]}{\partial h^l[t]}=1$ ; otherwise, $\frac{\partial s^l[t]}{\partial h^l[t]}=0$.

\begin{figure}
\centering
\includegraphics[width=3in, keepaspectratio]{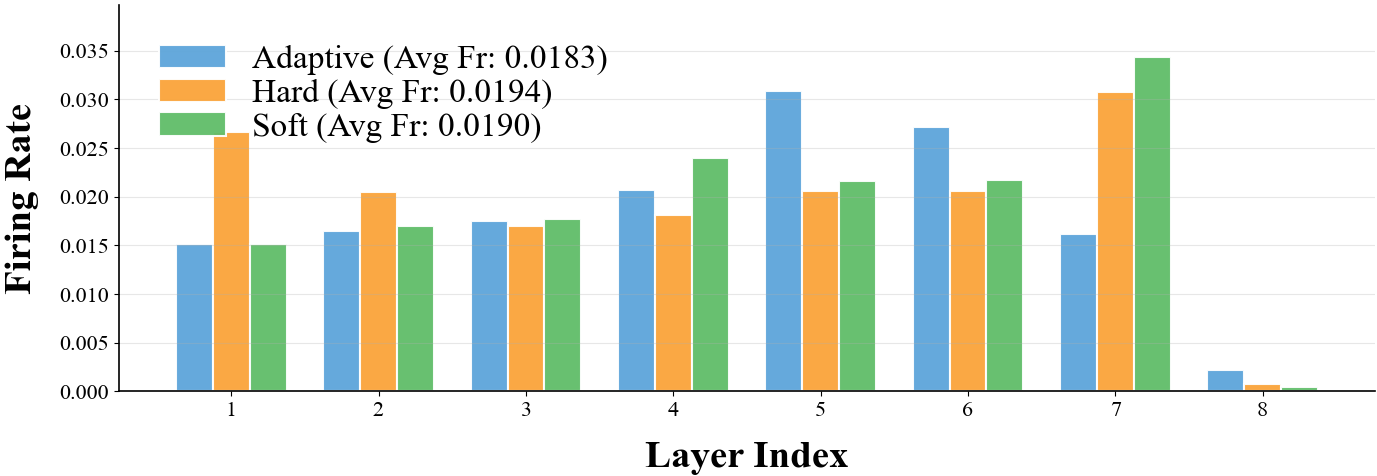}\\
\caption{Firing rates of different neurons across layers.
}
\label{fig_2}
\end{figure}

\begin{table}
\caption{Compare with previous studies on neuromorphic and static datasets.\label{tab2}}
\centering
\resizebox{0.48\textwidth}{!}{
\begin{tabular}{ccccc}
\toprule
Dataset & Method & Architecture & T & Acc.(\%)\\
\midrule
\multirow{4}{*}{CIFAR-10}  
 & TET\cite{deng2022tet} & ResNet-19 & 2/4/6 & 94.16/94.44/94.50\\
 & GLIF\cite{guo2022reducing} & ResNet-18 & 2/4/6 & 94.15/94.67/94.88\\
 & MPD-AGL\cite{jiang2025agl} & ResNet-19 & 2/4/6 & 96.18/96.35/96.54\\
  \cline{2-5}
 & Ours & ResNet-18 & 4/6/8 & 96.1/96.3/96.7\\
\midrule
\multirow{5}{*}{CIFAR-100} & GLIF\cite{yao2022glif}	& ResNet-18	& 6 & 77.28\\
 & IM-LIF\cite{lian2024imlif} & ResNet-19 & 3/6 & 77.21/77.42\\
 & DA-LIF\cite{zhang2025dalif} & ResNet-19 & 1/2/4 & 79.77/80.59/80.16\\
 & MPD-AGL\cite{jiang2025agl} & ResNet-19 & 2/4/6 & 78.84/79.72/80.49\\
  \cline{2-5}
 & Ours & ResNet-18 & 4/6/8 & 80.1/80.8/81.0\\
\midrule
\multirow{6}{*}{Tiny-ImageNet} & IM-LIF\cite{lian2024imlif} & ResNet-19 & 6 & 55.37\\
 & MPD-AGL\cite{jiang2025agl} & VGG-13 & 4 & 58.14\\
& LM-H\cite{hao2023lmh} & VGG-13 & 4 & 59.93\\
 & CLIF\cite{huang2024clif} & VGG-13 & 4/6 & 63.16/64.13\\
  \cline{2-5}
 & \multirow{2}{*}{Ours} & VGG-13 & 4/6 & 66.80/67.51\\
 &  & VGGSNN & 4/6 & 68.75/68.93\\
\midrule
\multirow{5}{*}{CIFAR10-DVS} & TEBN\cite{duan2022tebn} & VGGSNN & 10 & 84.9\\
 & DA-LIF\cite{zhang2025dalif} & VGG-16 & 16 & 82.42\\
 & AGMM\cite{agmm} & VGGSNN & 10 & 82.40\\
 & FPT\cite{feng2025fpt} & VGG-11 & 10 & 85.7\\
  \cline{2-5}
 & Ours & VGGSNN & 4/8/16 & 86.9/87.6/87.9\\
\midrule
\multirow{4}{*}{DVS-Gesture}  & MPE-PSN\cite{chen2025mpe} & 7B-Net & 16/20 & 97.92/97.22\\
 & DA-LIF\cite{zhang2025dalif} & VGG-11	& 16	& 98.61\\
 & FPT\cite{feng2025fpt} & VGG-11 & 20 & 98.61\\
  \cline{2-5}
 & Ours & VGG-11 & 20 & 98.61\\
\bottomrule
\end{tabular}
}
\end{table}

\subsection{Comparisons with Existing Methods}
We evaluated the performance of AR-LIF on multiple static and neuromorphic datasets, and the results of their comparison with SOTA methods are presented in Table \ref{tab2}.

\textbf{CIFAR-10 \& CIFAR-100}.
Compared with GLIF \cite{yao2022glif}, our method remains leading on the ResNet-18 architecture. AR-LIF yields accuracies of 96.7\% and 81.0\% at time step 8, performing on par with the leading results in SNNs.

\textbf{Tiny-ImageNet}.
In comparison with CLIF \cite{huang2024clif}, our method maintains a leading advantage of more than 3\% on the VGG-13 architecture. Notably, our method achieves accuracies of 68.75\% (T=4) and 68.93\% (T=6) on the VGGSNN architecture, representing the SOTA performance in SNNs.

\textbf{CIFAR10-DVS} \cite{li2017dvs}.
Our method achieves accuracies of 86.9\%, 87.6\%, and 87.9\% at 4, 8, and 16 time steps, respectively. This represents the SOTA result on CIFAR10-DVS.

\textbf{DVS-Gesture} \cite{amir2017ges}.
On the VGG-11, our method can also achieve an accuracy of 98.61\% at 20 time steps. This demonstrates the broad effectiveness of our method.

\begin{table}
\caption{Ablation study. "Both" denotes the scenario where $\alpha^l=1$ and $V_{\text {th}}^l[t]=1$, while "Learnable" denotes AR-LIF.\label{tab3}}
\centering
\resizebox{0.45\textwidth}{!}{
\begin{tabular}{ccccccc}
\toprule
\multirow{3}{*}{\makecell{Dataset \\ 4 Time Steps}} & \multicolumn{6}{c}{Accuracy(\%)} \\ 
\cline{2-7}
& \multirow{2}{*}{Hard} & \multirow{2}{*}{Soft} & \multicolumn{4}{c}{Adaptive} \\ 
\cline{4-7}
& & & $\alpha^l=1$ & $V_{\text {th}}^l[t]=1$ & Both & Learnable \\
\midrule
CIFAR100 & 73.4 & 63.4 & 77.0 & 76.5 & 76.1 & 77.8\\
\midrule
CIFAR10DVS & 83.8 & 84.1 & 85.4 & 84.6 & 84.3 & 85.7 \\
\bottomrule
\end{tabular}
}
\end{table}

\subsection{Ablation Study}
To verify the effectiveness of our methods, we performed ablation experiments, with results shown in Table \ref{tab3}. Validation on both static and neuromorphic datasets fully confirms the superiority of AR-LIF and the validity of its dynamic components. Specifically, adaptive reset contributes positively, and the threshold adjustment strategy also plays a positive role. Comparative results indicate that threshold adjustment is more critical than the learnability of $\alpha^l$.

\begin{table}
\caption{The energy consumption for different tasks with the whole testing datasets.\label{tab4}}
\centering
\resizebox{0.48\textwidth}{!}{
\begin{tabular}{cccccccc}
\toprule
Dataset & Net & T & Params(M) & FLOPs(M) & MACs(M) & ACs(M) & Energy($\mu$J)\\
\midrule
CIFAR-10 & ResNet-18 & 4 & 11.2 & 1211.51 & 1.77 & 95.65 & 94.22\\
\midrule
CIFAR-100 & ResNet-18 & 4 & 11.2 & 1211.60 & 1.77 & 96.72 & 95.19\\
\midrule
Tiny-ImageNet & VGGSNN & 4 & 10.9 & 4847.11 & 763.08 & 209.56 & 3698.80\\
\midrule
CIFAR10-DVS & VGGSNN & 4 & 9.3 & 2724.21 & 428.09 & 57.35 & 2020.84\\
\midrule
DVS-Gesture & VGG-11 & 20 & 9.5 & 4872.13 & 1078.11 & 23.71 & 4980.63\\
\bottomrule
\end{tabular}
}
\end{table}

\begin{figure}
\centering
\includegraphics[width=2.8in, keepaspectratio]{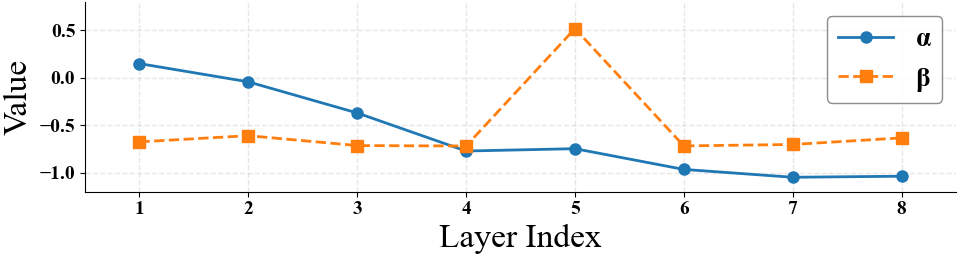}\\
\caption{Tracking of learnable parameters in each layer.
}
\label{fig_3}
\end{figure}

\subsection{Analysis of Computational Efficiency}
To comprehensively evaluate AR-LIF's performance, we conducted a computational cost analysis with results in Table \ref{tab4}. The energy consumption calculation therein follows \cite{yao2023sdt}. For ease of comparison, experiments on the Gesture dataset were performed with the time step set to 4. As seen in Fig. \ref{fig_2}, AR-LIF alters temporal dynamics, causing notable changes in each layer's spike firing rate. Overall, however, it shows a lower average spike firing rate, indicating a slight energy consumption advantage alongside excellent performance.

\subsection{Learnable Parameters Tracking}
As shown in Equation (\ref{eq9}), our method contains two learnable parameters. We visualized the learnable parameters of each layer. The experiment was conducted on the Gesture dataset with T=20, using the VGG-11 architecture. $\alpha^l$ is initialized to 1 and $\beta^l$ to 0. As shown in Fig. \ref{fig_3}, both $\alpha^l$ and $\beta^l$ have undergone sufficient learning and adjustment, confirming the effectiveness of parameter learnability.

\section{Conclusion}
\label{sec:conclusion}
In this work, we analyze the drawbacks of existing reset modes, and propose a spiking neuron model with adaptive reset, AR-LIF. Adaptive reset leverages accumulated input current and output spikes to dynamically and heterogeneously adjust the reset voltage. Combined with a spatiotemporally independent threshold adjustment strategy, AR-LIF performs excellently on static and neuromorphic datasets while maintaining an energy consumption advantage. Extensive ablation studies further confirm the effectiveness of our method.

\vfill\pagebreak

\bibliographystyle{IEEEbib}
\bibliography{refs}

@article{maass1997networks,
  title={Networks of spiking neurons: the third generation of neural network models},
  author={Maass, W.},
  journal={Neural networks},
  volume={10},
  number={9},
  pages={1659--1671},
  year={1997},
  publisher={Elsevier}
}

@article{meng2025sd,
  title={Spike-driven incepformer: A hierarchical spiking transformer with inception-inspired feature learning},
  author={Meng, W. and Huang, Z. and Liu, Q. and Cai, M. and Chen, K. and Ma, L.},
  journal={Neurocomputing},
  volume={648},
  pages={130727},
  year={2025},
  publisher={Elsevier}
}

@inproceedings{yao2023sdt,
  title={Spike-driven transformer},
  author={Yao, M. and Hu, J. and Zhou, Z. and Yuan, L. and Tian, Y. and Xu, B. and Li, G.},
  booktitle={Proceedings of Advances in Neural Information Processing Systems (NeurIPS)},
  volume={36},
  pages={64043--64058},
  year={2023}
}

@article{yao2025scalingv3,
  title={Scaling spike-driven transformer with efficient spike firing approximation training},
  author={Yao, M. and Qiu, X. and Hu, T. and Hu, J. and Chou, Y. and Tian, K. and Liao, J. and Leng, L. and Xu, B. and Li, G.},
  journal={IEEE Transactions on Pattern Analysis and Machine Intelligence},
  volume={47},
  pages={2973-2990},
  year={2025},
  publisher={IEEE}
}

@article{wu2018stbp,
  title={Spatio-temporal backpropagation for training high-performance spiking neural networks},
  author={Wu, Y. and Deng, L. and Li, G. and Zhu, J. and Shi, L.},
  journal={Frontiers in neuroscience},
  volume={12},
  pages={331},
  year={2018},
  publisher={Frontiers Media SA}
}

@inproceedings{deng2022tet,
  author= {S. Deng and Y. Li and S. Zhang and S. Gu},
  title={Temporal Efficient Training of Spiking Neural Network via Gradient Re-weighting},
  booktitle= {Proceedings of International Conference on Learning Representations (ICLR)},
  year={2022},
}

@article{liu2023reconstruction,
  title={Reconstruction of adaptive leaky integrate-and-fire neuron to enhance the spiking neural networks performance by establishing complex dynamics},
  author={Liu, Q. and Cai, M. and Chen, K. and Ai, Q. and Ma, L.},
  journal={IEEE Transactions on Neural Networks and Learning Systems},
  volume={36},
  pages={2619-2633},
  year={2023},
  publisher={IEEE}
}

@inproceedings{zhang2025dalif,
  title={DA-LIF: Dual Adaptive Leaky Integrate-and-Fire Model for Deep Spiking Neural Networks},
  author={Zhang, T. and Yu, K. and Zhang, J. and Wang, H.},
  booktitle={IEEE International Conference on Acoustics, Speech and Signal Processing (ICASSP)},
  pages={1-5},
  year={2025},
  organization={IEEE}
}

@article{ai2025cross,
  title={A cross-layer residual spiking neural network with adaptive threshold leaky integrate-and-fire neuron and learnable surrogate gradient},
  author={Ai, Q. and Yang, Y. and Cai, M. and Chen, K. and Liu, Q. and Ma, L.},
  journal={Knowledge-Based Systems},
  volume={319},
  pages={113575},
  year={2025},
  publisher={Elsevier}
}

@inproceedings{feng2025fpt,
  title={Efficient Parallel Training Methods for Spiking Neural Networks with Constant Time Complexity},
  author={Feng, W. and Gao, X. and Du, W. and Shi, H. and Zhao, P. and Wu, P. and Miao, C.},
  booktitle={Proceedings of International Conference on Machine Learning (ICML)},
  year={2025}
}

@inproceedings{ding2025rethinking,
  title={Rethinking spiking neural networks from an ensemble learning perspective},
  author={Ding, Y. and Zuo, L. and Jing, M. and He, P. and Deng, H.},
  booktitle = {Proceedings of International Conference on Learning Representations (ICLR)},
  volume={2025},
  pages = {34477-34503},
  year={2025}
}

@inproceedings{chen2025mpe,
  title={Time-independent Spiking Neuron via Membrane Potential Estimation for Efficient Spiking Neural Networks},
  author={Chen, H. and Yu, L. and Zhan, S. and Yao, P. and Shao, J.},
  booktitle={IEEE International Conference on Acoustics, Speech and Signal Processing (ICASSP)},
  pages={1-5},
  year={2025},
  organization={IEEE}
}

@inproceedings{huang2024clif,
  title={Clif: Complementary leaky integrate-and-fire neuron for spiking neural networks},
  author={Huang, Y. and Lin, X. and Ren, H. and Fu, H. and Zhou, Y. and Liu, Z. and Pan, B. and Cheng, B.},
  booktitle={Proceedings of International Conference on Machine Learning (ICML)},
  pages={19949-19972},
  year={2024}
}

@article{lian2024imlif,
  title={Im-lif: Improved neuronal dynamics with attention mechanism for direct training deep spiking neural network},
  author={Lian, S. and Shen, J. and Wang, Z. and Tang, H.},
  journal={IEEE Transactions on Emerging Topics in Computational Intelligence},
  volume={8},
  number={2},
  pages={2075-2085},
  year={2024},
  publisher={IEEE}
}

@inproceedings{zhang2024tc,
  title={Tc-lif: A two-compartment spiking neuron model for long-term sequential modelling},
  author={Zhang, S. and Yang, Q. and Ma, C. and Wu, J. and Li, H. and Tan, K.},
  booktitle={Proceedings of the AAAI conference on artificial intelligence (AAAI)},
  volume={38},
  number={15},
  pages={16838-16847},
  year={2024}
}

@inproceedings{guo2022reducing,
  title={Reducing information loss for spiking neural networks},
  author={Guo, Y. and Chen, Y. and Zhang, L. and Wang, Y. and Liu, X. and Tong, X. and Ou, Y. and Huang, X. and Ma, Z.},
  booktitle={Proceedings of European Conference on Computer Vision (ECCV)},
  pages={36-52},
  year={2022},
  organization={Springer}
}

@article{zhao2022backeisnn,
  title={BackEISNN: A deep spiking neural network with adaptive self-feedback and balanced excitatory--inhibitory neurons},
  author={Zhao, D. and Zeng, Y. and Li, Y.},
  journal={Neural Networks},
  volume={154},
  pages={68-77},
  year={2022},
  publisher={Elsevier}
}

@article{li2017dvs,
  title={Cifar10-dvs: an event-stream dataset for object classification},
  author={Li, H. and Liu, H. and Ji, X. and Li, G. and Shi, L.},
  journal={Frontiers in neuroscience},
  volume={11},
  pages={244131},
  year={2017},
  publisher={Frontiers}
}

@inproceedings{amir2017ges,
  title={A low power, fully event-based gesture recognition system},
  author={Amir, A. and Taba, B. and Berg, D. and Melano, T. and McKinstry, J. and others},
  booktitle={Proceedings of the IEEE Conference on Computer Vision and Pattern Recognition (CVPR)},
  pages={7243-7252},
  year={2017}
}

@inproceedings{yao2022glif,
  title={Glif: A unified gated leaky integrate-and-fire neuron for spiking neural networks},
  author={Yao, X. and Li, F. and Mo, Z. and Cheng, J.},
  booktitle={Proceedings of Advances in Neural Information Processing Systems (NeurIPS)},
  volume={35},
  pages={32160-32171},
  year={2022}
}

@inproceedings{hao2023lmh,
  title={A progressive training framework for spiking neural networks with learnable multi-hierarchical model},
  author={Hao, Z. and Shi, X. and Huang, Z. and Bu, T. and Yu, Z. and Huang, T.},
  booktitle={Proceedings of International Conference on Learning Representations (ICLR)},
  pages={4925-4941},
  volume={2024},
  year={2024}
}

@inproceedings{jiang2025agl,
  title={Adaptive Gradient Learning for Spiking Neural Networks by Exploiting Membrane Potential Dynamics},
  author={Jiang, J. and Wang, L. and Jiang, R. and Fan, J. and Yan, R.},
  booktitle={Proceedings of International Joint Conference on Artificial Intelligence (IJCAI)},
  year={2025}
}

@inproceedings{duan2022tebn,
  title={Temporal effective batch normalization in spiking neural networks},
  author={Duan, C. and Ding, J. and Chen, S. and Yu, Z. and Huang, T.},
  booktitle={Proceedings of Advances in Neural Information Processing Systems (NeurIPS)},
  volume={35},
  pages={34377-34390},
  year={2022}
}

@inproceedings{stssa,
  author={Zhou, Z. and Niu, J. and Zhang, Y. and Yuan, L. and Zhu, Y.},
  booktitle={IEEE International Conference on Acoustics, Speech and Signal Processing (ICASSP)}, 
  title={Spiking Transformer with Spatial-Temporal Spiking Self-Attention}, 
  year={2025},
  pages={1-5},
  }

@article{zheng2024den,
  title={Temporal dendritic heterogeneity incorporated with spiking neural networks for learning multi-timescale dynamics},
  author={Zheng, H. and Zheng, Z. and Hu, R. and Xiao, B. and Wu, Y. and Yu, F. and Liu, X. and Li, G. and Deng, L.},
  journal={Nature Communications},
  volume={15},
  number={1},
  pages={277},
  year={2024},
  publisher={Nature Publishing Group UK London}
}

@inproceedings{wang2022ltmd,
  title={LTMD: learning improvement of spiking neural networks with learnable thresholding neurons and moderate dropout},
  author={Wang, S. and Cheng, T. and Lim, M.},
  booktitle={Proceedings of Advances in Neural Information Processing Systems (NeurIPS)},
  volume={35},
  pages={28350-28362},
  year={2022}
}

@inproceedings{agmm,
  title={Towards Accurate Binary Spiking Neural Networks: Learning with Adaptive Gradient Modulation Mechanism},
  author={Liang, Y. and Wei, W. and Belatreche, A. and Cao, H. and Zhou, Z. and Wang, S. and Zhang, M. and Yang, Y.},
  booktitle={Proceedings of the AAAI conference on artificial intelligence (AAAI)},
  volume={39},
  number={2},
  pages={1402--1410},
  year={2025}
}

\end{document}